%% file: main.tex
\RequirePackage[T1]{fontenc}
 \documentclass[conference]{IEEEtran}
\IEEEoverridecommandlockouts
\usepackage{amsmath,amssymb,amsthm}
\usepackage{graphicx,booktabs,cite,microtype}
\usepackage[hidelinks]{hyperref}
\graphicspath{{figs/}}
\newcommand{\Nr}{N_r}
\newcommand{\dN}{d_{90}}

\newcommand{\R}{\mathbb{R}}
\DeclareMathOperator{\logit}{logit}
\theoremstyle{definition}
\newtheorem{definition}{Definition}
\theoremstyle{plain}
\newtheorem{proposition}{Proposition}
\theoremstyle{remark}
\newtheorem{remark}{Remark}
\hypersetup{pdftitle={Emergency Obstacle Avoidance Maneuvers in Differential-Drive Mobile Robots},pdfauthor={Kautik Mandve and Dr. Nilay Kant},pdfsubject={Experimental comparison of emergency maneuvers, footprint clearance, and odometric error}}
\title{\LARGE \bf Emergency Obstacle Avoidance Maneuvers in \\Differential-Drive Mobile Robots}
\author{Kautik Mandve and Dr. Nilay Kant
\thanks{Kautik Mandve is a high-school student. Dr. Nilay Kant is with the Department of Mechanical and Aerospace Engineering, Missouri University of Science and Technology, Rolla, MO 65409, USA.}
\thanks{Corresponding author: Dr. Nilay Kant (\texttt{nilaykant@mst.edu}).}}
\input{contrast_rows.tex}
\input{d90_rows.tex}

\begin{document}
\raggedbottom
\maketitle

\begin{abstract}
Emergency obstacle avoidance requires a mobile robot to brake or change its heading within the available distance. This paper investigates the dependence of maneuver performance and odometric error on approach speed for a differential-drive robot. A footprint-clearance analysis and a normalized wheel-motion index provide a kinematic description of five braking and turning maneuvers. The principal experiment comprises 540 block-randomized trials on tile, of which 539 are retained, at commanded approach speeds of 45, 55, and 65 cm/s. An overhead camera provides an independent pose reference. When encoder, gyro, and camera heading changes are evaluated over complete motion records, the mean encoder discrepancy increases by 4.7--5.1 degrees for the two reverse-spin maneuvers between the lowest and highest speeds; the arc and brake-assisted pivot change by less than 0.4 degrees. Larger encoder discrepancy is associated with lower avoidance success after adjustment for distance, speed, maneuver, day, and turn direction. Gyro discrepancies remain approximately 3 degrees for the reverse spins. Estimated reaction distances for 90\% success and their uncertainty quantify the maneuver trade-offs. The results support speed-specific empirical characterization of emergency maneuvers and distinguish encoder error from recovery motion and measurement-window mismatch.
\end{abstract}
\begin{IEEEkeywords}
Mobile robots, obstacle avoidance, differential drive, wheel odometry, inertial sensing, experimental validation.
\end{IEEEkeywords}

\section{Introduction}
Emergency obstacle avoidance requires a choice between reducing forward motion and redirecting it. The decision depends on the available reaction distance, the robot footprint, and the motion that the actuators can realize. Dynamic-window methods restrict candidate velocities using acceleration and stopping constraints \cite{fox1997}; velocity obstacles describe velocities leading to future collisions \cite{fiorini1998}. Vehicle threat-assessment methods explicitly compare braking, steering, and acceleration maneuvers \cite{brannstrom2010}. Reachability and control barrier functions provide complementary ways to characterize admissible motion under specified models and constraints \cite{bansal2017,ames2019}. Experimental characterization of execution error can inform these methods when nominal wheel motion differs from body motion.

Wheel odometry is particularly vulnerable to departures from the rolling assumption. Borenstein and Feng distinguish systematic calibration errors from nonsystematic errors caused by slip and surface irregularities \cite{borenstein1996umb}; combining gyros and wheel encoders provides additional heading information \cite{borenstein1996gyro}. Chong and Kleeman develop an odometry error model for mobile-robot motion estimation \cite{chong1997}. These contributions establish the importance of separating calibration and measurement uncertainty from departures of wheel motion from body motion.

Related work on tracked and wheeled skid-steer robots explicitly accounts for nonideal turning kinematics. Mart\'inez \emph{et al.} identify approximate tracked-vehicle models through effective instantaneous centers of rotation \cite{martinez2005}, and Mandow \emph{et al.} extend this experimental approach to wheeled skid-steer vehicles \cite{mandow2007}. Dogru and Marques incorporate terrain and center-of-mass effects and evaluate different terrains and asymmetric loads \cite{dogru2021}. Yi \emph{et al.} combine skid-steer kinematics with low-cost inertial measurements for motion estimation \cite{yi2009}. These platforms have different contact constraints from the two-drive-wheel robot considered here; their identified parameters cannot be transferred directly. They nevertheless motivate measuring actual motion when evaluating nominal wheel profiles.

Slip estimation has also been studied using dynamic models \cite{ward2008}, motor current \cite{ojeda2006}, online odometry calibration \cite{degiorgi2024}, and learned relationships between IMU and encoder signals in outdoor terrain \cite{nourizadeh2023}. Tire-dynamics treatments describe the force constraints underlying combined braking and steering \cite{pacejka2012,rajamani2012}, while Kritayakirana and Gerdes study autonomous control at the limits of handling \cite{kritayakirana2012}. This literature supplies modeling and sensing approaches for nonideal motion. The present experiment instead characterizes the clearance and heading discrepancies of fixed emergency-maneuver profiles across approach speeds. It does not identify a tire-force model or evaluate a new online slip estimator.

High wheel-speed contrast can produce a rapid heading change, but does not by itself determine the distance needed to avoid an obstacle. The rotating footprint may advance toward the obstacle even if the robot reference point remains fixed. Incoming motion, transient braking, and subsequent recovery add further displacement. Moreover, a turn terminated at a prescribed encoder count need not produce the intended final body heading. These effects motivate joint measurement of clearance, net heading discrepancy, and approach speed.

This paper presents a geometric analysis of footprint clearance and a controlled comparison of five maneuvers. A normalized wheel-motion index describes the nominal profiles without interpreting them as measured tire forces. The principal dataset contains 539 usable trials from a balanced maneuver--speed experiment. Encoder and gyro heading changes are compared with the camera over complete motion records, and reaction-distance requirements are estimated with uncertainty. Two earlier campaigns, comprising 300 and 210 trials, respectively, motivated the controlled design; they are not pooled with the principal dataset. The resulting characterization relates maneuver selection to reaction distance and measured sensing error.

The remainder of the paper is organized as follows. Section~\ref{sec:problem} formulates the clearance and heading-measurement problems. Section~\ref{sec:analysis} develops the kinematic analysis. Section~\ref{sec:experiment} describes the apparatus and statistical methods. Section~\ref{sec:results} presents the results. Sections~\ref{sec:discussion} and~\ref{sec:conclusion} discuss their implications and conclude the paper.

\section{Problem Formulation}\label{sec:problem}
\begin{figure*}[t]
 \centering
 \includegraphics[width=\linewidth]{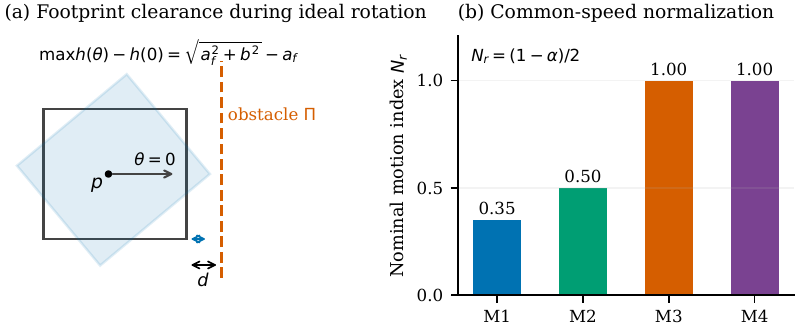}
 \caption{Kinematic description. (a) A rectangular footprint at the initial heading and at the heading of maximum forward support. The reference point is fixed in this idealized illustration; $d$ denotes initial front clearance. (b) Nominal wheel-motion indices for a common outer-wheel speed. The normalization describes wheel profiles and does not represent measured friction utilization.}
 \label{fig:geometry}
\end{figure*}

\subsection{Robot motion and obstacle geometry}
Let $p=[x,y]^\top\in\R^2$ and $\theta$ denote the planar robot reference position and heading. For track width $L$ and wheel peripheral velocities $u_L,u_R$, ideal rolling gives
\begin{equation}
 \dot p=v[\cos\theta,\sin\theta]^\top,\quad
 v=\frac{u_R+u_L}{2},\quad \dot\theta=\frac{u_R-u_L}{L}.
 \label{eq:kin}
\end{equation}
These relations are nominal kinematics; encoder-derived peripheral velocities are not measurements of ground-relative wheel velocity during slip.

The obstacle is represented by a virtual line in the floor plane,
$\Pi=\{z:n^\top z=c\}$, where $n$ is a unit normal pointing toward the obstacle. The admissible side satisfies $n^\top z\leq c$. Let $\mathcal B$ denote a rectangular footprint expressed relative to the tracked reference point, with front offset $a_f$, rear offset $a_r$, and half-width $b$. With $R(\theta)$ the planar rotation matrix, define clearance by
\begin{equation}
 C(t)=c-n^\top p(t)-\max_{r\in\mathcal B}n^\top R(\theta(t))r.
 \label{eq:clearance}
\end{equation}
The maximum occurs at a footprint corner. The experimental minimum $C_{\min}$ is the minimum of (\ref{eq:clearance}) over valid camera frames. A trial succeeds if $C_{\min}\geq0$. This is a sampled footprint-crossing criterion for a virtual obstacle; physical impacts and obstacle detection are not tested.

The initial heading is normal to the plane. The robot approaches at a commanded speed $v_c$ and initiates a selected maneuver after a prescribed encoder-measured travel distance. We denote the commanded front-bumper reaction distance by $d$. It is the prescribed reference-point distance minus $a_f$, and need not equal the camera-measured clearance at the instant of triggering. All fitted reaction-distance results below use this commanded coordinate.

\subsection{Heading measurements}
Let $\Theta_c$ be the unwrapped camera heading change from the initial stationary pose to the final stationary pose. The corresponding whole-record encoder and gyro changes are
\begin{equation}
 \Theta_e=\frac{1}{L}\int(u_R-u_L)\,dt,\qquad
 \Theta_g=\int\omega_g\,dt,
 \label{eq:heading}
\end{equation}
where $\omega_g$ is the recorded bias-corrected gyro yaw rate. Each integral covers the recorded motion, including approach, maneuver, and recovery. Initial and final stationary intervals permit comparison of the net changes without aligning individual samples between devices.

\begin{definition}\label{def:heading}
For the single-direction turns considered here, the absolute net-heading discrepancies are
\begin{equation}
 E_e=\bigl||\Theta_e|-|\Theta_c|\bigr|,\qquad
 E_g=\bigl||\Theta_g|-|\Theta_c|\bigr|.
 \label{eq:error}
\end{equation}
Magnitudes accommodate the opposite camera and onboard yaw-sign conventions. Errors are reported in degrees.
\end{definition}
The signed quantity $|\Theta_e|-|\Theta_c|$ distinguishes encoder overestimation from underestimation; $E_e$ measures their magnitude. Neither an absolute endpoint difference nor a net integral measures the accumulated absolute slip over the trajectory. Encoder scale, track-width error, and unobserved body motion can all contribute to $E_e$.

\subsection{Experimental objective}
The objective is to determine, for the tested robot and surface, how approach speed changes (i) net encoder and gyro discrepancies, (ii) the association between encoder discrepancy and avoidance success, and (iii) the distribution of minimum clearance and required reaction distance. The maneuver parameters are fixed within the principal campaign. The analyses are exploratory.

\section{Kinematic Analysis of Emergency Maneuvers}\label{sec:analysis}
\subsection{Footprint cost of an ideal rotation}
An ideal rotation about a fixed reference point does not necessarily preserve bumper clearance. The following calculation isolates the footprint contribution from translational carry.

\begin{proposition}\label{prop:geometry}
Suppose the reference point remains fixed while the robot rotates from a heading normal to $\Pi$ through $\pi/2$. For $a_f\geq0$ and $b\geq0$, the initial front clearance required solely by the rotating rectangular footprint is
\begin{equation}
 d_{\rm rot}=\sqrt{a_f^2+b^2}-a_f.
 \label{eq:sweep}
\end{equation}
\end{proposition}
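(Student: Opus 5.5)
We set coordinates so the fixed reference point $p$ is the origin and $n=[1,0]^\top$. The initial heading is $\theta=0$, and we rotate through $\theta\in[0,\pi/2]$; by symmetry the turn direction does not matter. With $p$ fixed, (\ref{eq:clearance}) becomes $C(\theta)=c-h(\theta)$, where
\begin{equation*}
 h(\theta)=\max_{r\in\mathcal B}n^\top R(\theta)r .
\end{equation*}
This is the support function of the rotated rectangle in direction $n$. Since $\mathcal B=[-a_r,a_f]\times[-b,b]$ and a linear function on a rectangle is maximized at a corner, we get
\begin{equation*}
 h(\theta)=\max_{s_1\in\{a_f,-a_r\},\,s_2\in\{b,-b\}}\bigl(s_1\cos\theta-s_2\sin\theta\bigr).
\end{equation*}
For $\theta\in[0,\pi/2]$ we have $\cos\theta,\sin\theta\ge0$, so the maximum is $a_f\cos\theta+b\sin\theta$, attained by the front corner on the side swinging toward the obstacle. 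The initial front clearance is $d=c-h(0)=c-a_f$.

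Next, we maximize $h(\theta)=a_f\cos\theta+b\sin\theta$ over $[0,\pi/2]$. Writing it as $\sqrt{a_f^2+b^2}\cos(\theta-\phi)$ with $\phi=\operatorname{atan2}(b,a_f)\in[0,\pi/2]$ uses the hypotheses $a_f,b\ge0$. It shows the maximum is $\sqrt{a_f^2+b^2}$, attained at $\theta=\phi$, which lies inside the swept interval. This $\phi$ is the ``heading of maximum forward support'' in Fig.~\ref{fig:geometry}(a). The degenerate case $a_f=b=0$ gives $h\equiv0$ and $d_{\rm rot}=0$, which is consistent with the formula.

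Finally, the rotation is collision-free iff $C(\theta)\ge0$ for all $\theta\in[0,\pi/2]$. This holds iff $c\ge\sqrt{a_f^2+b^2}$, i.e. iff $d\ge\sqrt{a_f^2+b^2}-a_f$. So the minimal required initial front clearance is $d_{\rm rot}$ as in (\ref{eq:sweep}).

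The step needing the most care is the corner selection. We must justify that over the whole interval $[0,\pi/2]$ the active corner is $(a_f,\mp b)$, with the sign matching the turn direction, so that the rear offset $a_r$ never enters. This requires checking the signs of $\cos\theta$ and $\sin\theta$ and confirming that $a_f\ge -a_r$ does not matter because $\cos\theta\ge0$. Everything else is the standard amplitude–phase identity.
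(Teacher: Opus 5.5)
Your proof is correct and follows the paper's route: identify the forward support $h(\theta)=a_f\cos\theta+b\sin\theta$ on $[0,\pi/2]$, maximize it, and subtract $h(0)=a_f$. You use the amplitude--phase form where the paper locates $\theta=\tan^{-1}(b/a_f)$, and your corner-selection argument justifies a step the paper only asserts; note, however, that the comparison $a_f\cos\theta\ge -a_r\cos\theta$ does rely on $a_f\ge -a_r$, which holds automatically because $[-a_r,a_f]$ is a nonempty interval, so that condition is needed rather than irrelevant.
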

\begin{proof}
For $0\leq\theta\leq\pi/2$, the forward support of the footprint is $h(\theta)=a_f\cos\theta+b\sin\theta$. Its maximum is $\sqrt{a_f^2+b^2}$, attained at $\theta=\tan^{-1}(b/a_f)$ when $a_f>0$. Subtracting $h(0)=a_f$ gives (\ref{eq:sweep}); the limiting cases follow directly.
\end{proof}
For the experimental footprint, $a_f=9.0$ cm and $b=7.5$ cm, so $d_{\rm rot}=2.72$ cm. This is a geometric reference for the stated ideal motion, not a bound for every actual maneuver: translation away from or toward the plane changes the required clearance. For a left turn the same result follows by symmetry.

For straight braking with constant deceleration magnitude $a_b$ after latency $\tau$, the nominal stopping distance is
\begin{equation}
 d_b=v_0\tau+\frac{v_0^2}{2a_b}.
 \label{eq:brake}
\end{equation}
Neither $a_b$ nor $\tau$ is identified from the success curves in this study. Consequently, (\ref{eq:brake}) explains the expected speed dependence but is not used to infer a friction coefficient or motor-torque limit.

\subsection{Nominal wheel-motion index}
For a steady ideal wheel profile, the path-normal acceleration of wheel-center point $i$ is $|\omega u_i|$, with $\omega=(u_R-u_L)/L$. Consider a common nominal outer-wheel speed $U>0$ and inner-wheel speed $\alpha U$, where $-1\leq\alpha\leq1$.
\begin{definition}\label{def:index}
The normalized wheel-motion index is
\begin{equation}
 \Nr=\frac{\max_i|\omega u_i|}{2U^2/L}
     =\frac{1-\alpha}{2}.
 \label{eq:index}
\end{equation}
The denominator is the wheel-center normal-acceleration scale of the ideal counter-rotating profile $u_R=-u_L=U$.
\end{definition}
The nominal ratios for the arc, pivot, and two reverse spins give $\Nr=0.35$, $0.50$, $1.00$, and $1.00$, respectively. The index describes the nominal profile; controller transients and exit motion require empirical evaluation. M0 commands no rotation and is treated separately.

\begin{remark}\label{rem:force}
Equation~(\ref{eq:index}) is a kinematic normalization, not a friction-utilization ratio. A wheel-center acceleration cannot be multiplied by an assumed share of robot mass to identify the contact force without a rigid-body/contact model. Although tire-force constraints motivate examining aggressive maneuvers \cite{pacejka2012,rajamani2012}, no coefficient of friction, normal-load distribution, or saturation boundary is measured here. In particular, $\Nr=1$ denotes the chosen reference profile and does not establish tire saturation.
\end{remark}

\begin{figure}[t]
 \centering
 \includegraphics[width=\linewidth]{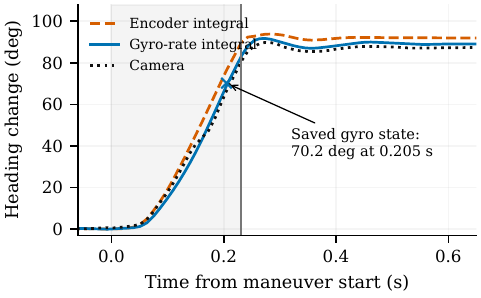}
 \caption{Illustrative M3 right turn at a 65 cm/s command. Shading denotes the commanded turn. The stored gyro summary of 70.2 degrees matches a sample 26 ms before turn completion; rotation also continues during recovery. Whole-record gyro integration follows the final camera heading more closely. Camera alignment uses cross-correlation ($r=0.992$); the population endpoint metrics do not require this alignment.}
 \label{fig:window}
\end{figure}

\subsection{Encoder termination and final heading}
Under (\ref{eq:kin}), an encoder-differential threshold corresponds to a prescribed nominal rotation. Under slip, however, the true angular velocity can differ from $(u_R-u_L)/L$. With a common yaw-sign convention and motion interval, the signed heading difference is
\begin{equation}
 \Theta_e-\Theta_c=\int\left[\frac{u_R-u_L}{L}-\dot\theta\right]dt.
 \label{eq:gap}
\end{equation}
The wheel signals alone cannot determine the integrand without additional information about body motion or slip. A gyro provides an independent yaw-rate measurement, but it must be evaluated over the same motion interval.

In particular, a firmware summary saved when the turn threshold is reached can contain an earlier gyro state and excludes subsequent recovery. Comparing it with final camera heading therefore confounds sampling time, recovery motion, and sensor error. Figure~\ref{fig:window} illustrates this distinction. The results use (\ref{eq:heading})--(\ref{eq:error}), with the encoder geometry separated from the firmware's turn-count calibration.

\section{Experimental Apparatus and Methods}\label{sec:experiment}
\subsection{Robot and maneuver implementation}
Experiments use a Pololu Romi 32U4 with 7.0 cm wheel diameter, 14.7 cm track width, and 1440 encoder counts per wheel revolution. A 16 MHz ATmega32U4 executes a 16 ms wheel-speed control loop. An LSM6DS33 gyro records yaw motion; neither the gyro nor the camera closes the heading loop. The measured top-speed reference is 77.5 cm/s. Approach speed is commanded in absolute cm/s, independently of the maneuver's nominal entry fraction.

All maneuvers use one parameterized execution engine. M0 brakes without a commanded turn. M1 drives the inner wheel forward to produce an arc. M2 brakes the inner wheel while driving the outer wheel. M3 commands opposite wheel motion and exits with zero target speed. M4 uses the same opposite-wheel profile with an earlier deceleration ramp and a nonzero exit target. Table~\ref{tab:maneuvers} gives the parameters. M3 and M4 thus differ in both deceleration shaping and exit speed; their comparison does not isolate a single parameter.

Turning wheel-speed targets can reach 150 cm/s, exceeding the measured top speed and driving effort toward the controller ceiling of $\pm420$ units. Effort can leave saturation during ramping or deceleration. A nominal 90-degree turn terminates at an absolute differential of 1440 encoder counts. This empirical threshold is distinct from the geometric conversion used in (\ref{eq:heading}). After each turn, the prescribed departure and stopping sequence is included in the recording. M0 is scored using the same footprint-clearance criterion as the turns.

\begin{table*}[t]
\centering
\caption{Maneuver parameters and principal-campaign commanded front-bumper distance grids. All turns have a nominal 90-degree encoder target.}
\label{tab:maneuvers}
\small
\begin{tabular}{@{}llrrrrrl@{}}
\toprule
 & Maneuver & $\alpha$ & $K_p,K_d$ & Ramp (s) & Decel. fraction & Exit/top & $d$ grid (cm) \\
\midrule
M0 & Hard brake & --- & $100,0$ & 0 & --- & 0 & 2, 4, 5, 6, 8, 11 \\
M1 & Driven-inner arc & 0.3 & $30,2$ & 0.25 & 0.30 & 0.6 & 11, 14, 16, 18, 21, 23 \\
M2 & Brake-assisted pivot & 0 & $100,0$ & 0 & 0.17 & 0.4 & 7, 10, 12, 14, 17, 21 \\
M3 & Reverse spin & $-1$ & $100,0$ & 0 & 0.17 & 0 & 4, 6, 8, 10, 12, 16 \\
M4 & Managed reverse spin & $-1$ & $100,0$ & 0 & 0.70 & 0.3 & 4, 6, 7, 9, 11, 14 \\
\bottomrule
\end{tabular}
\begin{minipage}{0.97\textwidth}\footnotesize
Deceleration fraction is the final fraction of the encoder-defined turn over which the target magnitude is reduced. $K_i=0$ during each maneuver; gains are implementation units. Exit/top is the exit-speed target divided by the measured top-speed reference. The nominal inner/outer ratio $\alpha$ describes the ideal motion used in (\ref{eq:index}), not a measured ratio throughout the transient.
\end{minipage}
\end{table*}

\subsection{Camera measurement and calibration}
A separate computer records a 1920$\times$1080 overhead video at a nominal 60 frames/s. A $4\times4$ ArUco marker on the robot provides pose, and four surveyed floor markers define a per-clip homography \cite{garridojurado2014}. Intrinsic calibration yields 0.56 pixel RMS reprojection error; the mean per-clip floor residual is approximately 0.64 cm. The tracked marker is 8 cm above the floor. With camera height $H$, nadir $N$, and floor-homography output $M$, height-induced parallax is corrected using
\begin{equation}
 P=N+\frac{H-h}{H}(M-N),\qquad h=8\text{ cm}.
 \label{eq:parallax}
\end{equation}
Camera height and nadir are obtained from the surveyed floor markers using perspective pose estimation; the saved calibration gives $H=76.8$ cm. The homothety in (\ref{eq:parallax}) changes marker position while preserving planar orientation.

Static validation at two surveyed marks gives mean position bias of 0.30 cm and mean frame-to-frame repeatability of 0.014 cm. These are position-validation statistics, not bounds on dynamic heading accuracy or on each measured clearance. Fiducial-pose error depends on imaging geometry and marker visibility \cite{kalaitzakis2021}. The footprint offsets relative to the tracked point are $a_f=9.0$ cm, $a_r=7.5$ cm, and $b=7.5$ cm. These define the rectangular approximation used for scoring.

\subsection{Trial design and quality control}
The principal campaign crosses five maneuvers, three commanded speeds (45, 55, and 65 cm/s), six maneuver-specific distances, and six repetitions, giving 540 trials. Three randomized blocks each contain 180 trials. For each turning maneuver--speed--distance combination, three repetitions turn left and three turn right. Trials were collected over two days; the day boundary occurred within a block. Every condition was represented on both days, avoiding the pilot's aliasing of maneuver and collection day.

The robot is initially stationary within a camera placement gate of 8 cm and 8 degrees. A virtual plane is defined 50 cm from the measured start reference point, perpendicular to the measured initial heading. The firmware triggers the maneuver after the prescribed approach distance. The camera supplies the outcome after the trial and is not used for feedback during avoidance. Each trial records camera pose, timestamped wheel speeds, gyro rate, maneuver-phase markers, and battery minimum voltage. Batteries are replaced during collection to limit voltage-related degradation; remaining day effects are considered in the regression.

A recorded heading-unwrapping flag excludes one M4 trial at the highest speed, leaving 539 trials: 36 per maneuver--speed condition except that condition, which has 35. All retained trials use the per-clip floor map. Six success classifications differ between the saved and per-clip homographies; these boundary-sensitive trials remain in the principal dataset. No trial is excluded merely because avoidance fails. 

The synchronization LED was obscured by the marker mount. Cross-correlation of motion signatures provides alignment in 456 of the 540 collected records, with a motion-onset fallback in 84. Eighty-two records have correlation below 0.6. Consequently, event-aligned transient plots require individual checks. The principal clearance, net-heading, and path-discrepancy statistics do not use the estimated lag. Whole-record integration is recomputed from the raw timestamped traces rather than from the firmware's turn-end summary.

\subsection{Secondary displacement-discrepancy metric}
For comparison with the original experiment design, define
\begin{equation}
 \begin{split}
 S_{\ell}&=\left|\sum_k\|p_{k+1}-p_k\|-\int\left|\frac{u_R+u_L}{2}\right|dt\right|,\\
 S_r&=\frac{L}{2}\bigl||\Theta_e|-|\Theta_c|\bigr|,\qquad
 S=\sqrt{S_{\ell}^2+S_r^2}.
 \end{split}
 \label{eq:composite}
\end{equation}
Angles in $S_r$ are in radians. Only camera steps strictly between 0.05 and 8.0 cm are included in the path sum; the lower cutoff suppresses stationary jitter and the upper cutoff rejects large pose jumps. The recorded $S$ values use these whole-run quantities before rounding.

The composite $S$ is a displacement-discrepancy score in centimeters. It is not a direct measurement of cumulative tire sliding: cancellation can occur, body lateral motion is not resolved into contact-patch slip, and calibration error contributes to the path difference. Both its definition and the distance grid affect comparisons. Heading error is therefore the primary sensing measure, and $S$ is a secondary empirical trade-off axis.

\subsection{Statistical analysis}\label{sec:stats}
For maneuver $m$ and commanded speed $s$, success is modeled as
\begin{equation}
 \begin{aligned}
 \logit P(Y=1\mid d,m,s)&=a_{ms}+b_{ms}d,\\
 \widehat{\dN}_{ms}&=\frac{\log 9-\hat a_{ms}}{\hat b_{ms}}.
 \end{aligned}
 \label{eq:d90}
\end{equation}
Firth penalization gives finite estimates in separated conditions \cite{firth1993,heinze2002}. Each fit uses 35 or 36 binary outcomes, and every fitted slope is positive. The 95\% intervals for $\dN$ are obtained by profiling the full Firth-penalized likelihood over the constraint $a+b\dN=\log9$, using a one-degree-of-freedom likelihood-ratio cutoff. These are approximate model-based intervals, not a guarantee of 90\% reliability for a future deployment. Extrapolation beyond the tested grid is identified when it occurs.

Means and lowest-to-highest-speed differences in $E_e$ use percentile intervals from 10,000 bootstrap resamples within each maneuver--speed--distance cell, preserving the number of trials at each distance. To examine adjusted association in the 431 turning trials, a separate ordinary logistic model is fitted:
\begin{equation}
 \begin{split}
 \logit P(Y_i=1)={}&\beta_0+\beta_d d_i+\beta_e E_{e,i}\\
 &+\gamma_{m_i}+\eta_{s_i}+\delta_{D_i}+\zeta_{T_i}.
 \end{split}
 \label{eq:assoc}
\end{equation}
Maneuver, commanded speed, collection day $D_i$, and turn direction $T_i$ enter as categorical effects. $E_e$ is in degrees and $d$ in centimeters. This model converges without separation. A stratified trial bootstrap uses 2000 resamples with the same maneuver--speed--distance allocation. Sensitivity models replace commanded-speed categories with the camera-derived approach-speed statistic, or allow maneuver-specific distance slopes. Reported $p$ values are two-sided Wald values and are exploratory, without multiplicity adjustment.

Clearance variability is described after subtracting a fitted linear distance trend separately within each maneuver--speed condition. A point-estimate Pareto comparison minimizes $\widehat{\dN}$ and mean $S$. A maneuver is nondominated when no other tested maneuver has no larger values on both axes and a smaller value on at least one. Frontier membership itself is not treated as a statistically established ordering.

\section{Experimental Results}\label{sec:results}
\subsection{Campaign context and speed control}
The 300-trial pilot suggested an advantage of M4 over M3, but maneuver identity was confounded with collection day and an approximately 4.3 cm/s approach-speed difference. That comparison cannot establish a maneuver effect. A subsequent 210-trial experiment held the commanded speed at 45 cm/s while varying deceleration shaping with other parameters fixed. It did not detect a change in reaction-distance requirement or the original slip statistic. This null result motivated varying approach speed; it does not establish equivalence of the deceleration settings or prove that slip is absent at low speed.

The 539-trial principal dataset supplies all comparisons below. The mean camera-derived approach-speed statistics are 47.7, 58.2, and 68.7 cm/s for commands of 45, 55, and 65 cm/s, respectively. The per-trial statistic is the 90th percentile of smoothed camera speed in the selected straight-approach region, not an average over the entire approach. Differences in its mean across maneuvers are less than 0.5 cm/s within a commanded-speed level. Thus the commanded levels are well separated, while the nominal command should not be identified with exact ground speed.

Pooled success fractions are approximately 82\%, 68\%, and 48\% across the three speeds. Mean minimum clearance decreases from 3.52 to 2.39 to 0.51 cm. Because the same distance grid is used at all speeds within a maneuver, these summarize the speed trend under the experimental allocation. Raw success fractions should not be used to rank maneuvers whose distance grids differ.

\subsection{Encoder discrepancy increases for the reverse spins}
Figure~\ref{fig:heading} shows the whole-record heading discrepancies. For M3, mean $E_e$ increases from 3.80 to 4.71 to 8.90 degrees. For M4 it increases from 4.60 to 7.02 to 9.33 degrees. In contrast, M1 remains near 6.2 degrees and M2 remains between 4.4 and 5.3 degrees. Table~\ref{tab:contrast} reports the low-to-high-speed changes with uncertainty.

\begin{figure*}[t]
 \centering
 \includegraphics[width=\linewidth]{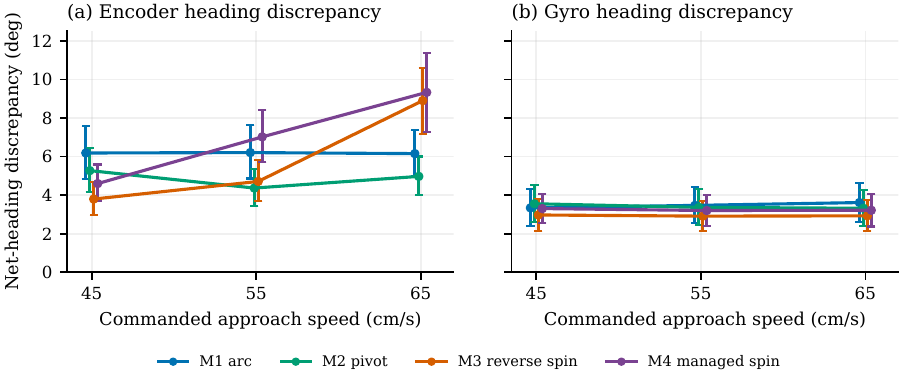}
 \caption{Whole-record net-heading discrepancies for the four turning maneuvers. Points are condition means; bars are 95\% bootstrap intervals from resampling trials within distance cells. Each condition contains 36 trials except M4 at 65 cm/s ($n=35$). Horizontal offsets separate overlapping markers. Encoder discrepancy grows with speed for M3 and M4, whereas the integrated gyro discrepancies remain approximately constant.}
 \label{fig:heading}
\end{figure*}

\begin{table}[t]
\centering
\caption{Change in mean $E_e$ from 45 to 65 cm/s command, in degrees. Intervals are exploratory 95\% bootstrap intervals.}
\label{tab:contrast}
\begin{tabular}{@{}lrr@{}}
\toprule
Maneuver & Mean change & Interval \\
\midrule
\contrastrows
\bottomrule
\end{tabular}
\end{table}
The two profiles with $\Nr=1$ exhibit a clear increase in net encoder discrepancy over the tested speed range. Because $\Nr$ is fixed within maneuver, this comparison does not separate it from gains, transients, and exit behavior.

\subsection{Gyro discrepancy and measurement interval}
Using the same complete-motion convention, mean $E_g$ is 2.91--2.97 degrees for M3 and 3.20--3.30 degrees for M4. The arc and pivot are also approximately stable, at 3.3--3.6 degrees. These results do not show a speed-dependent loss of gyro heading accuracy comparable to the encoder trend.

The distinction between complete records and firmware summaries is consequential. For M3, comparing the stored gyro summary with final camera heading produces mean discrepancies of 15.4--18.2 degrees. In Fig.~\ref{fig:window}, the summary of 70.2 degrees matches the logged gyro state 26 ms before turn completion. Rate integration gives 82.8 degrees during the commanded turn and a further 6.3 degrees afterward; the whole-record gyro and camera rotations are 89.3 and 88.7 degrees, respectively. Thus both the earlier snapshot and recovery contribute to the original mismatch. These observations establish the need for consistent measurement intervals; they do not validate a gyro-feedback controller, which was not tested.

\subsection{Encoder discrepancy and avoidance success}
For (\ref{eq:assoc}), $\hat\beta_e=-0.222$ per degree, with $p=7.8\times10^{-4}$. The corresponding odds ratio for success is 0.80 per additional degree of net encoder discrepancy. The 95\% stratified-bootstrap interval is $[-0.431,-0.105]$ for the coefficient, or $[0.65,0.90]$ for the odds ratio; all 2000 resampled fits converge.

The association persists when replacing commanded-speed categories with the camera-derived approach statistic ($\hat\beta_e=-0.231$, $p=6.0\times10^{-4}$), and when allowing maneuver-specific distance slopes ($\hat\beta_e=-0.221$, $p=7.8\times10^{-4}$). Thus the result is not explained solely by the common-distance-slope specification or by using commanded speed.

This is an adjusted association with an outcome measured after execution. $E_e$ is neither a randomized intervention nor information available to the encoder-only controller during the maneuver. The camera contributes to both heading and clearance estimates, so correlated measurement errors may also contribute. The regression does not establish that slip causes failure, and the odds ratio is not an absolute change in success probability.

\begin{figure*}[t]
 \centering
 \includegraphics[width=\linewidth]{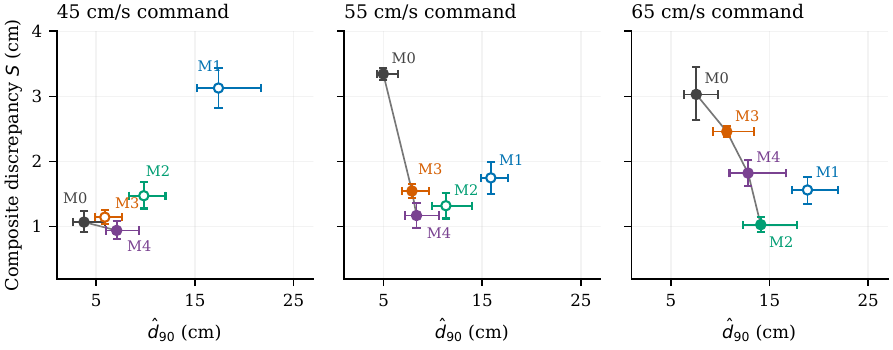}
 \caption{Speed-specific reaction-distance/displacement-discrepancy comparisons. Lower values are preferred. Filled markers indicate point-estimate nondominated maneuvers; hollow markers show the remaining maneuvers. Horizontal bars are 95\% profile intervals for $\dN$ and vertical bars are 95\% bootstrap intervals for mean $S$. Gray lines join the point-estimate frontier and do not represent tested intermediate maneuvers. Overlapping uncertainty and the sensitivity of $S$ preclude a definitive frontier-membership claim.}
 \label{fig:pareto}
\end{figure*}

\subsection{Clearance variability}
After removing the distance trend within each maneuver--speed condition, the pooled residual standard deviations are 0.88, 0.90, and 1.42 cm. For M4, the residual standard deviation increases from 0.57 to 2.02 cm between the lowest and highest speeds. The higher-speed measurements therefore show greater residual scatter, especially for the managed reverse spin. With only three speed levels and no independent surface replication, this observation does not identify a sharp transition from deterministic to stochastic behavior.

\subsection{Reaction distance and empirical trade-offs}
Table~\ref{tab:d90} reports the fitted reaction distance for 90\% success. All point estimates lie within the corresponding tested grid. The upper interval limit for M4 at 65 cm/s extends beyond its largest tested distance of 14 cm, indicating uncertainty beyond the sampled range. The fitted success curves and observed proportions are provided in Appendix~\ref{app:curves}.

\begin{table*}[t]
\centering
\caption{Estimated commanded front-bumper reaction distance $\widehat{\dN}$ (cm), with 95\% profile penalized-likelihood intervals.}
\label{tab:d90}
\begin{tabular}{@{}lccc@{}}
\toprule
Maneuver & 45 cm/s command & 55 cm/s command & 65 cm/s command \\
\midrule
\distancerows
\bottomrule
\end{tabular}
\end{table*}

M0 has the lowest point-estimate distance requirement at each speed, increasing from 3.8 to 7.6 cm. Among turns, M3 increases from 5.9 to 10.7 cm, M4 from 7.1 to 12.8 cm, and M2 from 9.8 to 14.1 cm. M1 requires the most room and is nonmonotonic (17.4, 15.9, and 18.8 cm). Therefore, steering does not reduce the estimated reaction distance below hard braking in this tested setting.

Figure~\ref{fig:pareto} displays all maneuvers on the $\widehat{\dN}$--mean-$S$ plane, with uncertainty on both axes. The point-estimate nondominated set is M0/M4 at 45 cm/s, M0/M3/M4 at 55 cm/s, and M0/M3/M4/M2 at 65 cm/s. M2 joins the highest-speed frontier with the lowest mean composite discrepancy, at the cost of a larger distance requirement. M1 is dominated in this two-metric comparison. M3 and M4 trade shorter reaction distance against lower composite discrepancy at the two higher speeds; the controlled data do not reproduce a universal M4 advantage.

The composite metric is sensitive to processing and scale. With lower camera-step thresholds of 0, 0.02, 0.05, and 0.10 cm, the pooled mean $S$ values at the three speeds are, respectively, $(2.04,2.63,2.95)$, $(1.58,1.89,2.06)$, $(1.55,1.83,1.98)$, and $(1.57,1.79,1.88)$ cm. The pooled speed ordering persists, but absolute magnitudes change. The camera-derived approach statistic also exceeds the corresponding late-approach wheel-speed statistic by approximately 6\%; their different temporal definitions and possible scale error prevent interpreting this offset directly as wheel slip. These checks support treating $S$ as an empirical discrepancy score, not an identified physical sliding distance.

\section{Discussion}\label{sec:discussion}
\subsection{Implications for maneuver selection and sensing}
The results distinguish three considerations in emergency motion. First, footprint geometry imposes a clearance cost even for an ideal rotation about a fixed reference point. Second, the measured reaction-distance requirement increases with speed for braking and the three tighter turning maneuvers. Third, the reverse-spin profiles develop larger encoder discrepancies, while the whole-record gyro discrepancies remain comparatively stable.

These observations favor characterizing each candidate maneuver at the relevant approach speed and evaluating it against the available distance and task objective. Hard braking requires the least room in this experiment, but it does not perform a commanded change of heading. If redirection or continued motion is required, a distance-only comparison is insufficient. Conversely, comparing a braking and a turning maneuver on $S$ alone does not establish which best serves a navigation task. The empirical frontiers describe the chosen two metrics and experimental allocation, rather than a complete operational utility or a guaranteed safety envelope.

The low-cost gyro provides useful additional information in these records. This motivates evaluating gyro-assisted termination and recovery control. Gyrodometry \cite{borenstein1996gyro} and encoder--IMU slip estimation \cite{nourizadeh2023} provide relevant precedents for using onboard sensing; the present endpoint comparison does not establish which estimator or feedback law would perform best during a rapid transient. A camera remains valuable as an independent experimental reference. A controlled feedback comparison would be needed to determine whether using the gyro reduces final error or improves avoidance success.

\subsection{Limitations}
The study uses one robot, one tile surface, one camera arrangement, and two collection days. The experiment does not measure friction, contact forces, load transfer, or motor torque, and the nominal index does not account for these quantities. Maneuver identity and index are not independently varied.

Camera validation is static and positional, with two surveyed marks. Dynamic heading error, marker-mount rigidity, footprint approximation, and between-frame plane crossings remain limitations. Trial resampling does not propagate calibration uncertainty or temporal dependence within a collection session. The fixed day term adjusts an average day difference but does not supply independent session replication. Classifications near zero clearance are sensitive to the floor map, as shown by the six map-dependent outcomes.

The reaction-distance coordinate is commanded through encoder-based approach travel. It is not an independently measured detection-to-obstacle distance, and the experiment excludes detection latency and moving obstacles. Net-heading discrepancies can conceal cancellation of transient errors. The composite displacement metric is sensitive to camera path processing and wheel calibration. Finally, the modest replication at each distance limits the precision of fitted 90\% thresholds, and exploratory associations cannot establish a causal slip mechanism.

\section{Conclusion}\label{sec:conclusion}
Five emergency maneuvers were evaluated on a differential-drive robot using independent camera pose measurements. A geometric calculation quantified the footprint-clearance cost of an ideal rotation, and a normalized wheel-motion index described the tested turning profiles without assuming an identified friction limit. In 539 retained controlled trials, mean whole-record encoder discrepancy increased by approximately 4.7--5.1 degrees for the two reverse-spin maneuvers across the tested speed range, while the arc and pivot changed little. Larger encoder discrepancy was associated with lower success after covariate adjustment, and clearance scatter increased at the highest speed.

Whole-record gyro discrepancies remained near 3 degrees for the reverse spins. Comparing a stored turn-summary snapshot with a final post-recovery pose substantially overstated gyro error, demonstrating why sample timing and a common motion interval matter. The reaction-distance estimates and empirical frontiers support speed-specific characterization of maneuver trade-offs. Future experiments should vary surface and robot parameters, validate dynamic heading accuracy, measure actual trigger clearance, and compare encoder-only control with gyro-assisted turn and recovery feedback.

\clearpage
\onecolumn
\appendices
\section{Observed Success Fractions and Fitted Curves}\label{app:curves}
Figure~\ref{fig:curves} shows the binary-outcome data aggregated by distance and the fitted curves used in Table~\ref{tab:d90}. Each observed proportion is based on six trials, except the excluded M4 cell, which contains five. Firth penalization permits estimation when observed outcomes separate; it does not add observations between distance levels. The analysis package contains trial identifiers, complete-record sensor integrals, success counts, deterministic analysis scripts, and hashes of the raw inputs used for verification.

\begin{figure}[ht]
 \centering
 \includegraphics[width=\textwidth]{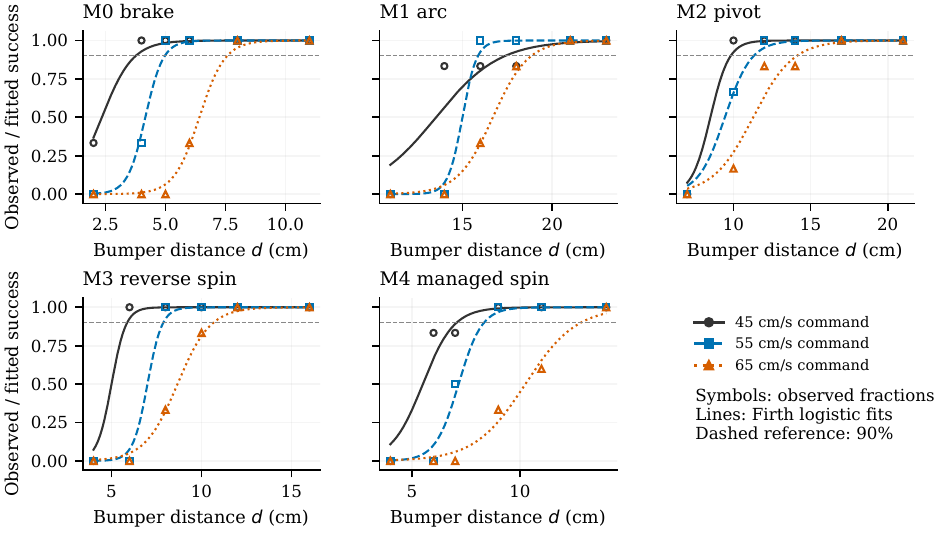}
 \caption{Observed success fractions and Firth logistic fits for every maneuver and commanded speed. Curves are drawn only over each maneuver's tested distance range. Multiple symbols can coincide at zero or unit success. These data support the distance estimates, while the profile intervals in Table~\ref{tab:d90} express model-based uncertainty.}
 \label{fig:curves}
\end{figure}
\end{document}

%% file: contrast_rows.tex
\newcommand{\contrastrows}{%
M1 & -0.03 & [-1.85, +1.80] \\
M2 & -0.30 & [-1.82, +1.22] \\
M3 & +5.11 & [+3.15, +7.07] \\
M4 & +4.73 & [+2.47, +6.97] \\}

%% file: d90_rows.tex
\newcommand{\distancerows}{%
M0 & 3.8 [2.7, 5.9] & 5.0 [4.3, 6.5] & 7.6 [6.4, 9.7] \\
M1 & 17.4 [15.2, 21.7] & 15.9 [14.9, 17.6] & 18.8 [17.3, 22.0] \\
M2 & 9.8 [8.4, 12.0] & 11.3 [9.9, 14.0] & 14.1 [12.3, 17.8] \\
M3 & 5.9 [4.9, 7.6] & 7.9 [6.9, 9.6] & 10.7 [9.3, 13.4] \\
M4 & 7.1 [6.0, 9.4] & 8.3 [7.2, 10.6] & 12.8 [10.9, 16.6] \\}